\title[Probabilistic Verification of ReLU NNs via CFs]{Probabilistic Verification of ReLU Neural Networks\\ via Characteristic Functions}
\newtheorem{thm}{Theorem}
\newtheorem{corr}{Corollary}
\newtheorem{prop}{Proposition}
\newtheorem{defn}{Definition}
\newcommand{\U}{\mathrm{U}}
\author{
\Name{Joshua Pilipovsky}  \Email{jpilipovsky3@gatech.edu}\\
\addr Daniel Guggenheim School of Aerospace Engineering, Georgia Institute of Technology
\AND
\Name{Vignesh Sivaramakrishnan} \Email{vigsiv@unm.edu}\\
\addr Department of Electrical and Computer Engineering, University of New Mexico%
\AND
\Name{Meeko M. K. Oishi} \Email{oishi@unm.edu}\\
\addr Department of Electrical and Computer Engineering, University of New Mexico%
\AND
\Name{Panagiotis Tsiotras} \Email{tsiotras@gatech.edu}\\
\addr Daniel Guggenheim School of Aerospace Engineering, Georgia Institute of Technology
}
\begin{document}

\maketitle

\begin{abstract}
Verifying the input-output relationships of a neural network so as to achieve
some desired performance specification is a difficult, yet important, problem due to the growing ubiquity of neural nets in many engineering applications. 
We use ideas from probability theory 
in the frequency domain to provide probabilistic verification guarantees for ReLU neural networks.
Specifically, we interpret a (deep) feedforward neural network as a discrete dynamical system over a finite horizon that shapes distributions of initial states, and use characteristic functions to propagate the distribution of the input data through the network.
Using the inverse Fourier transform, we obtain the corresponding cumulative distribution function of the output set, which can be used to check if the network is performing as expected given any random point from the input set.
The proposed approach does not require distributions to have well-defined moments or moment generating functions.
We demonstrate our proposed approach on two examples, and compare its performance to related approaches.
\end{abstract}

\begin{keywords}
  Neural networks, ReLU, verification, characteristic functions, distributional control.
\end{keywords}

\section{Introduction}

Neural networks (NN) have become a powerful tool in recent years for a large class of applications, including image classification \citep{imageClassNN}, speech recognition \citep{speechNN}, autonomous driving \citep{selfDrivingNN}, drone acrobatics \citep{droneAcrobatics}, and many others.
The formal verification of neural networks is crucial for their wider adoption in safety-critical scenarios. 
The main difficulty with the use of (deep) NN for safety-critical applications 
lies in the demonstrated sensitivity of DNNs to input uncertainties and/or adversarial attacks.
For example, in the context of image classification, adding even a small amount of noise to the input set can greatly change the network output~\citep{classificationAttack,adversaryNN}.
For safety-critical applications, DNNs should be robust or insensitive to input uncertainties, a property that be tested by verifying that the network prescribes to certain output specifications subject to various inputs. 

\begin{figure}
    \centering
    % \includestandalone[width=.8\textwidth]{NNvisual.tex}
    \includegraphics[width=0.8\textwidth]{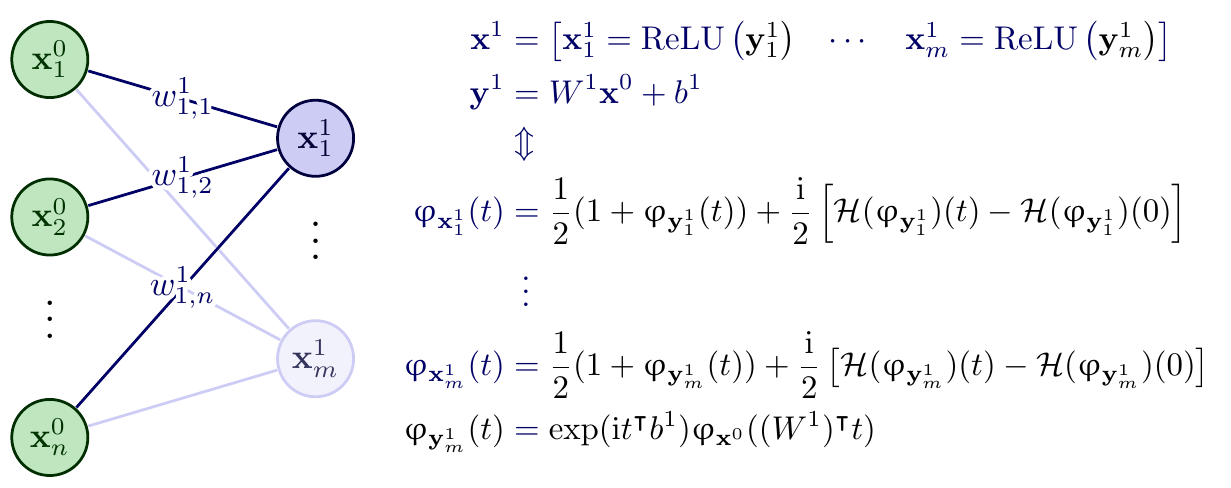}
    \caption{%
    The characteristic function of the input data can be propagated through a ReLU network analytically.
    This enables one to query the characteristic function (CF) of the network to answer out-of-distribution
    questions at the output.
    The use of CFs also circumvents difficulties in cases where the underlying distributions do not have any moments or moment-generating functions (e.g., Cauchy distribution).}
    \label{fig:high_level}
\end{figure}

Verification frameworks for DNNs can be classified as either {\em exact} or {\em probabilistic}. 
In exact verification, a deterministic input set is mapped to an output set; if any output falls outside the safety set, the verification fails.
This is referred to as \textit{worst-case} safety verification since the input set can be treated as an uncertainty set centered around some nominal input.
Given some input $x_0$ and a neural network $f:x\mapsto y$, exact verification can be posed as a nonlinear program (NLP), with the objective function quantifying satisfaction of some safety rule $y\in\mathcal{S}$.
In general, though, the resulting NLP is intractable using standard off-the-shelf solvers.
Several works have used mixed-integer linear programming (MILP) \citep{MILPNN1,MILPNN2}, Satisfiability Modulo Theories (SMT) \citep{SMT1,SMT2}, or semi-definite programming (SDP) \citep{VERIF_PAVONE,exactVerification_fazlyab,exactVerification_LP2,exactVerification_SDP2,exactVerification_LP1,exactVerification_SDP1}, to recast and solve this NLP problem.
In recent work, given an input or an output polytope, one can generate the respective output or input polytope through the ReLU neural network~\citep{9561956}.

In probabilistic verification, the input set itself is uncertain and potentially unbounded.
Random uncertainties naturally arise in practical applications, for example, from signal processing, environmental noise, and other exogenous disturbances.
In this context, the uncertainties are modeled in terms of probability distributions, and the verification problem is to find the \textit{probability} that the output is contained in a safety set given a random input from the input set.
Given a random input vector $\mathbf{x}_0$ 
and a neural network $f$, the probability that the output random vector $\mathbf{y} = f(\mathbf{x}_0)$ lies in some safety set $\mathcal{S}$ is greater than some threshold $1 - p$ is given by the {chance} constraint
\begin{equation}
    \label{eq:CC}
    \mathbb{P}(\mathbf{y}\in\mathcal{S}) \geq 1 - p.
\end{equation}
Relatively few works have studied the verification of DNNs in a probabilistic setting; most of the existing approaches involve under- or over-approximations.
In \cite{probVerification_fazlyab}, an output confidence ellipsoid is estimated via an SDP that is an affine and quadratic relaxation, and then equivalence between confidence sets and chance constraints is used to solve the verification problem.
PROVEN \citep{PROVEN} accommodates bounded disturbances, using linear approximations of activation functions and concentration inequalities to generate bounds on (\ref{eq:CC}).  
In \cite{Pautov_2022}, a similar approach is taken with Cramer-Chernoff concentration inequalities, but because it is based on sampling, a linear approximation of the activation functions is not needed. 
Generative DNNs are considered in \cite{probVerification_deepmind}, which formulates an upper bound on the chance constraint via duality.
Lastly, a scenario optimization approach in \cite{9857969} constructs a lower bound on \eqref{eq:CC} that depends upon the number of samples.

In this paper, we interpret a DNN as a dynamical system~\citep{NNdynamicalsystems1, NNdynamicalsystems2,E2017} that shapes distributions of data
and view the verification problem as one of 
propagating a distribution through a linear stochastic system to form an output distribution that needs to meet the safety constraints.
We focus on DNNs with rectified linear unit (ReLU) activation functions, as the piece-wise linear nonlinearity of ReLU allows us to \textit{analytically} propagate the distribution without any loss of accuracy.
Given an input distribution to the network, we compute its characteristic function and use its properties to derive the output distribution, from which we can verify the output chance constraint. 
Therefore, we can provide rigorous statistical guarantees for the performance of any given ReLU neural network for any input distribution.

The paper is organized as follows. 
Section~\ref{sec:probForm} introduces the preliminaries and problem formulation.
Section~\ref{sec:CFtheory} presents the main properties of characteristic functions we use in our work and states the main result that allows us to propagate a characteristic function through a ReLU neural network.
Section~\ref{sec:verification} presents the safety verification algorithm given the machinery developed in the previous section applied to output polytopes.
Examples demonstrating the theory are given in Section~\ref{sec:examples}, and we provide some concluding remarks and avenues for future work in Section~\ref{sec:conclusion}.

\section{Preliminaries and Problem Formulation} \label{sec:probForm}

\subsection{Notation}
Real-valued vectors are denoted by lowercase letters, $u \in \mathbb{R}^m$, matrices are denoted by uppercase letters, $V \in \mathbb{R}^{n \times m}$, and random vectors are denoted by boldface, $\mathbf{w}\in\mathbb{R}^{p}$.
We denote the $j^{th}$ component of a vector by the subscript $u_j$, and the $i^{th}$ row and $j^{th}$ column of a matrix by $V_{i,j}$.
The imaginary unit is denoted by $\mathrm{i} := \sqrt{-1}$.
The $d$-dimensional vector $e_{i,d} = [0 \ \cdots \ 1 \ \cdots \ 0]^\intercal \in\mathbb{R}^{d}$ is a basis vector which selects the $i^{th}$ element of a vector $\psi\in\mathbb{R}^{d}$ via $\psi_{i} = e_{i,d}^\intercal \psi$.
A random vector $\mathbf{w}$ is defined on the probability space $(\Omega,\mathscr{B}(\Omega),\mathbb{P}_{\mathbf{w}})$~\cite[Sec.~2]{billingsley_probability_2012}.
We only consider continuous random vectors, i.e., those having probability measure $\mathbb{P}_{\mathbf{w}}\left(\{\mathbf{w} \in \mathcal{S}\}\right) = \int_{\mathcal{S}} \psi_{\mathbf{w}}(z)\, \mathrm{d} z$ for $\mathcal{S}\subseteq\mathscr{B}(\Omega)$, and PDF $\psi_{\mathbf{w}}$ that satisfies $\psi_{\mathbf{w}}\geq 0$ almost everywhere (a.e.) such that $\int_{\mathbb{R}} \psi_{\mathbf{w}}(z)\, \mathrm{d} z = 1$.
For the random variable $\mathbf{y} = a^\intercal\mathbf{w},\ a\in\mathbb{R}^p$, we characterize the probability $\mathbb{P}\{a^\intercal\mathbf{w} \leq \alpha \}$ using the cumulative distribution function (CDF) $\Phi_{a^\intercal\mathbf{w}}:\mathbb{R}\rightarrow[0,1]$, that is, by $\mathbb{P}\{a^\intercal\mathbf{w} \leq \alpha \} =  \Phi_{a^\intercal\mathbf{w}}(\alpha)$~\cite[Sec.~14]{billingsley_probability_2012}. 
We write $\mathbf{w} \sim \psi_{\mathbf{w}}$ to denote the fact that $\mathbf{w}$ is distributed according to the PDF $\psi_{\mathbf{w}}$.
We denote a uniform distribution as $\U[a,b]$ where $a,b\in\mathbb{N},\
a<b$.

\subsection{Problem Formulation}

We consider an $L$-layer ReLU DNN with input $\mathbf{x}^0 \in \mathbb{R}^{h_{0}}$ and output $\mathbf{y} = f(\mathbf{x}^0) = \mathbf{x}^l\in\mathbb{R}^{h_{L}}$, with $f$ being the composition of $L$ layers, that is, $f = f_{L-1} \circ \cdots \circ f_0$.
The $k$th layer of the ReLU network corresponds to a function $f_k: \mathbb{R}^{h_{k}} \rightarrow \mathbb{R}^{h_{k+1}}$ of the form
\begin{equation}
    \mathbf{x}^{k+1} = f_k(\mathbf{x}^k) = \sigma(W^k \mathbf{x}^k + b^k), 
\end{equation}
where $W^k \in \mathbb{R}^{h_{k+1}\times h_k}$ is the weight matrix, $b^k \in\mathbb{R}^{h_{k+1}}$ is the bias, and $\sigma(x^k_j) := \textrm{max}(0,x^k_j)$ is the component-wise ReLU function, where $x^k_j$ is the $j$th component of $x^k\in\mathbb{R}^{h_k}$.
We assume that the last layer is an affine transformation, that is, $\mathbf{x}^L = W^{L-1} \mathbf{x}^{L-1} + b^{L-1}$.
Note that convolution layers can be captured by this framework, as they correspond to linear layers $W^k$ endowed with a particular matrix structure. 

Let the mapping $f : \mathcal{X} \mapsto \mathcal{Y}$ with 
$\mathcal{X}$ and $\mathcal{Y}$ subsets of Euclidean spaces of given dimensions, and let $\mathcal{S} \subset \mathcal{Y}$ denote the output safety set.
We would like to answer the following questions: \
\begin{itemize}
    \item Given a random sample from the input set $x = \mathbf{x} (\omega) \in \mathcal{X}$, where $\omega \in \Omega$,  what is the probability that the output $y = f(x) \in \mathcal{Y}$ lies in the output set $\mathcal{S}$?
    Equivalently, given some verification threshold $p\in(0,1]$, is the chance constraint (\ref{eq:CC}) satisfied for all $x\in\mathcal{X}$?
    \item Given the numerically computed output distribution $\hat{\psi}_{\mathbf{y}}$, what is the relative error in the probability of satisfaction of the output chance constraint compared to that of the true output distribution $\psi_{\mathbf{y}}$?
\end{itemize}

To answer the above questions, we use the machinery of characteristic functions (CF) to propagate a distribution through a ReLU network allowing us to perform the verification task.

\section{Characteristic Functions}\label{sec:CFtheory}

% To begin, w
We assume that the input distribution $\psi_0$ over the input set $\mathcal{X}$ is given.
% In probability theory, 
The analog of the probability density function $\psi_{\mathbf{x}}$ in the spatial domain is the characteristic function $\upvarphi_{\mathbf{x}}$ in the frequency domain.

\begin{defn}[Characteristic Function]
For a continuous random vector $\mathbf{w}\in\mathbb{R}^{p}$ such that $\mathbf{w} \sim \psi_{\mathbf{w}}$, the characteristic function (CF) is the Fourier transform $\mathcal{F}(\psi_{\mathbf{w}})(t)$ of the PDF $\psi_{\mathbf{w}}\in\mathcal{L}^2(\mathbb{R}^p)$ given by
\begin{align}
    \label{eq:CFdefn}
    \upvarphi_{\mathbf{w}}(t) := \mathbb{E}_{\mathbf{w}}\left[e^{\mathrm{i} t^\intercal\mathbf{w}}\right] = \mathcal{F}(\psi_{\mathbf{w}})(t) = \int_{\mathbb{R}^p}e^{\mathrm{i} t^\intercal z} \psi_{\mathbf{w}}(z)\, \mathrm{d} z,
\end{align} 
where $t\in\mathbb{R}^{p}$.
\end{defn}
The CF has the following properties~\citep{cramer_mathematical_1999,lukacs_characteristic_1970}: 
\begin{enumerate}[label={P\arabic*}:]
\itemsep=0pt
    \item It is uniformly continuous.
    
    \item It is bounded, i.e., $|\upvarphi_{\mathbf{w}}(t)| \leq 1$, $\forall t \in \mathbb{R}^{p}$.
    
    \item It is Hermitian, i.e., $\upvarphi_{\mathbf{w}}(-t) = \overline{\upvarphi}_{\mathbf{w}}(t)$, where $\bar\upvarphi$ denotes the complex conjugate of $\upvarphi$. 
    
    \item Let $\mathbf{w}_1,\mathbf{w}_2$ be random vectors of appropriate dimensions and let $\mathbf{z} = \mathbf{w}_1 + \mathbf{w}_2$.
    Then, $\psi_{\mathbf{z}}(z) = \big(\psi_{\mathbf{w}_1} * \psi_{\mathbf{w}_2}\big)(z)$ (i.e., convolution of their PDFs), and $\upvarphi_{\mathbf{z}}(t) = \upvarphi_{\mathbf{w}_1}(t)\upvarphi_{\mathbf{w}_2}(t)$.
    
    \item Given $\mathbf{z} = F\mathbf{w} + g$ for $F \in \mathbb{R}^{n\times p}, \ g\in\mathbb{R}^{n}$, the CF is $\upvarphi_{\mathbf{z}}(t) = e^{\mathrm{i} t^\intercal g} \upvarphi_{\mathbf{w}}(F^\intercal t)$.
   
    \item Given two scalar and independent random variables $\mathbf{w}_1, \mathbf{w}_2\in\mathbb{R}$, then $\mathbf{z} = [\mathbf{w}_1, \mathbf{w}_2]^\intercal \in \mathbb{R}^2$ has the PDF $\psi_{\mathbf{z}}(z) = \psi_{\mathbf{w}_1}(e_1^\intercal z) \psi_{\mathbf{w}_2}(e_2^\intercal z) = 
    \psi_{\mathbf{w}_1}(z_1) \psi_{\mathbf{w}_2}(z_2)$ where $z = [z_1, z_2]^\intercal \in \mathbb{R}^2$, and  $\upvarphi_{\mathbf{z}}(t) = \upvarphi_{\mathbf{w}_1}(e_1^\intercal t)\upvarphi_{\mathbf{w}_2}(e_2^\intercal t) =   \upvarphi_{\mathbf{w}_1}(t_1)\upvarphi_{\mathbf{w}_2}(t_2)$, where $t=[t_1,t_2]^\intercal \in \mathbb{R}^2$.
    
\end{enumerate}

We note that the characteristic function of a distribution always exists, even when the probability density function or moment-generating function do not exist.
We can recover the CDF of a distribution from its CF using the Hilbert transform.

\begin{defn}
The Hilbert transform (HT) of a function $f$ is defined as the linear integral operator
\begin{equation}
    \label{eq:HTdefn}
    \mathcal{H} (f)(t) := \frac{1}{\pi} \, \mathrm{p.v.} \int_{\mathbb{R}} \frac{f(\tau)}{t - \tau} \, \mathrm{d} \tau,
\end{equation}
where \emph{p.v.} denotes the Cauchy principal value, that is, 
\begin{equation}
    \label{eq:pv}
    \mathrm{p.v.} \int_{\mathbb{R}} f(t) \, \mathrm{d} t = \lim_{\epsilon\downarrow 0, a\uparrow\infty}
    \left[
    \int_{\epsilon}^{a}  f(t) \, \mathrm{d} t
    +
    \int_{-a}^{-\epsilon}  f(t) \, \mathrm{d} t
    \right].
\end{equation}
\end{defn}

If not stated otherwise, all integrals in this paper are understood in the principal value sense.
Using the change of variables $\tau \rightarrow -\tau$, one may equivalently express the HT as 
\begin{equation}
    \label{eq:HTdefn2}
    \mathcal{H} (f)(t) = \frac{1}{\pi} \int_{0}^{\infty} [f(t - \tau) - f(t + \tau)] \ \frac{\ \mathrm{d} \tau}{\tau}.
\end{equation}
Note that the Hilbert transform is bounded on $\mathcal{L}^2(\mathbb{R})$ ~\citep{MR2933659}.
To numerically compute the Hilbert transform of a continuous function, we use a finite expansion of the sinc function based on the work of \cite{sincHT}, which is parameterized by the resolution $h$ and the number of terms $M$ in the expansion.
\begin{thm}[Gil-Pelaez Inversion Theorem, \citep{sincHT,gil-pelaez_note_1951}]
\label{thm:GPI}
\newline
Given a random variable $\mathbf{x}$ with CF $\upvarphi_{\mathbf{x}}$, the CDF of $\mathbf{x}$, $\Phi_{\mathbf{x}}(\cdot)$, at each point of continuity $x$, can be evaluated by
\begin{equation}
    \label{eq:cftocdf}
    \Phi_{\mathbf{x}}(x) = \frac{1}{2}-\frac{\mathrm{i}}{2}\mathcal{H}(e^{-\mathrm{i}t x} \,\upvarphi_{\mathbf{x}}(t))(0),
\end{equation}
where $x,t\in\mathbb{R}$.
\end{thm}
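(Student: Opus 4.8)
The plan is to reduce \eqref{eq:cftocdf} to the classical Gil--Pelaez inversion formula and then verify that the right-hand side above is merely a compact rewriting of it in Hilbert-transform notation. First I would unwind the definition of the Hilbert transform at the origin: taking $f(t)=e^{-\mathrm{i}tx}\upvarphi_{\mathbf{x}}(t)$ in \eqref{eq:HTdefn} gives
\begin{equation*}
\mathcal{H}\big(e^{-\mathrm{i}tx}\upvarphi_{\mathbf{x}}(t)\big)(0) = -\frac{1}{\pi}\,\mathrm{p.v.}\!\int_{\mathbb{R}} \frac{e^{-\mathrm{i}tx}\upvarphi_{\mathbf{x}}(t)}{t}\,\mathrm{d}t,
\end{equation*}
so that the right-hand side of \eqref{eq:cftocdf} equals $\tfrac12 + \tfrac{\mathrm{i}}{2\pi}\,\mathrm{p.v.}\int_{\mathbb{R}} e^{-\mathrm{i}tx}\upvarphi_{\mathbf{x}}(t)/t\,\mathrm{d}t$.

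Next I would fold the integral over $\mathbb{R}$ onto the half-line. Substituting $t\mapsto -t$ on $(-\infty,0)$ and invoking the Hermitian property P3, $\upvarphi_{\mathbf{x}}(-t)=\overline{\upvarphi_{\mathbf{x}}(t)}$, the combined integrand on $(0,\infty)$ becomes $\tfrac1t\big(e^{-\mathrm{i}tx}\upvarphi_{\mathbf{x}}(t)-e^{\mathrm{i}tx}\overline{\upvarphi_{\mathbf{x}}(t)}\big) = \tfrac{2\mathrm{i}}{t}\operatorname{Im}\big(e^{-\mathrm{i}tx}\upvarphi_{\mathbf{x}}(t)\big)$, and the real parts cancel. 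Hence the right-hand side of \eqref{eq:cftocdf} equals $\tfrac12 - \tfrac1\pi\int_0^\infty \tfrac1t\operatorname{Im}\big(e^{-\mathrm{i}tx}\upvarphi_{\mathbf{x}}(t)\big)\,\mathrm{d}t$, which is precisely the Gil--Pelaez formula of \citep{gil-pelaez_note_1951}; this establishes the theorem.

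If a self-contained argument is preferred, I would instead prove the half-line formula directly. Writing $\upvarphi_{\mathbf{x}}(t)=\int_{\mathbb{R}}e^{\mathrm{i}tz}\psi_{\mathbf{x}}(z)\,\mathrm{d}z$ one has $\tfrac1t\operatorname{Im}\big(e^{-\mathrm{i}tx}\upvarphi_{\mathbf{x}}(t)\big)=\int_{\mathbb{R}}\tfrac{\sin(t(z-x))}{t}\,\psi_{\mathbf{x}}(z)\,\mathrm{d}z$; interchanging the order of integration and applying the Dirichlet integral $\int_0^\infty \tfrac{\sin(ts)}{t}\,\mathrm{d}t=\tfrac{\pi}{2}\operatorname{sgn}(s)$ gives $\int_0^\infty \tfrac1t\operatorname{Im}\big(e^{-\mathrm{i}tx}\upvarphi_{\mathbf{x}}(t)\big)\,\mathrm{d}t=\tfrac{\pi}{2}\big(\mathbb{P}(\mathbf{x}>x)-\mathbb{P}(\mathbf{x}<x)\big)$. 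Since $x$ is a point of continuity, $\mathbb{P}(\mathbf{x}=x)=0$, so this equals $\tfrac{\pi}{2}\big(1-2\Phi_{\mathbf{x}}(x)\big)$, and substituting back into $\tfrac12-\tfrac1\pi(\cdot)$ returns $\Phi_{\mathbf{x}}(x)$.

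The main obstacle is the interchange of the $z$-integration (the expectation) with the $t$-integration in the self-contained route: the kernel $\mathrm{p.v.}\int_{\mathbb{R}} e^{\mathrm{i}t(z-x)}/t\,\mathrm{d}t$ converges only conditionally, so Fubini does not apply verbatim. I would handle this by truncating to $\int_\epsilon^a$ as in \eqref{eq:pv}, invoking the uniform bound $\sup_{0<\epsilon<a,\,s\in\mathbb{R}}\big|\int_\epsilon^a \tfrac{\sin(ts)}{t}\,\mathrm{d}t\big|<\infty$ (boundedness of the sine-integral function) together with bounded convergence to pass to the limits $\epsilon\downarrow 0$, $a\uparrow\infty$ inside the $z$-integral. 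Everything else — the change of variables, the cancellation of the real parts, and the evaluation of the Dirichlet integral — is routine.
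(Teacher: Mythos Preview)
The paper does not supply its own proof of this theorem; it is stated with attribution to \citep{sincHT,gil-pelaez_note_1951} and used as a black box thereafter. Your proposal is correct and is exactly the standard route: you unwind the Hilbert transform at the origin to recover the principal-value integral, fold it onto $(0,\infty)$ via the Hermitian property to obtain the classical Gil--Pelaez half-line formula, and then either cite \citep{gil-pelaez_note_1951} or reprove it with the Dirichlet integral and a dominated/bounded-convergence justification for the interchange. Nothing is missing; the truncation argument you outline for the conditional convergence is the right way to make the Fubini step rigorous.
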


\subsection{Propagation of a Characteristic Function through a ReLU Network}
\label{sec:ReLU_CF}

Given an initial characteristic function $\upvarphi^{0}$ that represents the input distribution, we compute the output characteristic function $\upvarphi^{L}$.
At an arbitrary layer $k$ this propagation can be split into a two-step process: (i) propagate the CF through the affine layer to obtain $\upvarphi_{\mathbf{y}^k}$, where
$\mathbf{y}^k = W^k \mathbf{x}^k + b^k$,
and (ii) propagate the intermediate CF through the ReLU layer to obtain the output $\upvarphi_{\mathbf{x}^{k+1}}$. 
Using Property P2 of CFs, it is straightforward to compute
\begin{equation}
    \label{eq:ILindividual}
    \upvarphi_{\mathbf{y}^k}(t) = \exp(\mathrm{i} t^\intercal b^k) \upvarphi_{\mathbf{x}^k}((W^k)^\intercal t).
\end{equation}

Given the intermediate CF $\upvarphi_{\mathbf{y}^k}$, we can compute the component-wise CF after the ReLU, based on the work of \cite{CF_PINELIS}.
To this end,
let $\mathbf{x}\in\mathbb{R}$ be a scalar random variable, and introduce the operator
\begin{equation}
    \label{eq:Jdefn}
    J_a (\upvarphi)(t) := \frac{1}{2\pi\mathrm{i}}\int_{\mathbb{R}} e^{-\mathrm{i}a\eta} \upvarphi(t + \eta) \, \frac{\mathrm{d} \eta}{\eta}.
\end{equation}
Using the change of variables $u \mapsto -u$, one may equivalently write (\ref{eq:Jdefn}) as 
\begin{equation}
    \label{eq:Jdefn2}
    J_a(\upvarphi)(t) = \frac{1}{4\pi \mathrm{i}} \int_{\mathbb{R}} [e^{-\mathrm{i}a\eta} \upvarphi(t + \eta) - e^{\mathrm{i}a\eta}\upvarphi(t - \eta)] \, \frac{\mathrm{d} \eta}{\eta}.
\end{equation}
\begin{prop}
Let $\mathbf{x}\in\mathbb{R}$ be a real-valued random variable with characteristic function $\upvarphi_{\mathbf{x}}$.
Then, 
\begin{equation}
    \label{eq:prop1}
    J_{a} (\upvarphi_{\mathbf{x}})(t) = \frac{1}{2} \mathbb{E}[e^{\mathrm{i}t\bf{x}} \ \mathrm{sgn}(\mathbf{x} - a)].
\end{equation}
\end{prop}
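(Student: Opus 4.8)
The plan is to insert the integral representation $\upvarphi_{\mathbf{x}}(t+\eta)=\mathbb{E}\!\left[e^{\mathrm{i}(t+\eta)\mathbf{x}}\right]$ into the definition \eqref{eq:Jdefn} of $J_a$, swap the expectation with the $\eta$-integral, and recognize the remaining integral as a (principal-value) Dirichlet integral. Writing $e^{-\mathrm{i}a\eta}\upvarphi_{\mathbf{x}}(t+\eta)=\mathbb{E}\!\left[e^{\mathrm{i}t\mathbf{x}}e^{\mathrm{i}\eta(\mathbf{x}-a)}\right]$, this leads to
\[
J_a(\upvarphi_{\mathbf{x}})(t)=\mathbb{E}\!\left[e^{\mathrm{i}t\mathbf{x}}\cdot\frac{1}{2\pi\mathrm{i}}\,\mathrm{p.v.}\!\int_{\mathbb{R}}\frac{e^{\mathrm{i}\eta(\mathbf{x}-a)}}{\eta}\,\mathrm{d}\eta\right].
\]
I would then invoke the classical identity $\mathrm{p.v.}\int_{\mathbb{R}}\frac{e^{\mathrm{i}\eta c}}{\eta}\,\mathrm{d}\eta=\mathrm{i}\pi\,\mathrm{sgn}(c)$ — the term $\cos(\eta c)/\eta$ is odd, so it integrates to zero in the principal-value sense, while $\int_{\mathbb{R}}\frac{\sin(\eta c)}{\eta}\,\mathrm{d}\eta=\pi\,\mathrm{sgn}(c)$ — applied with $c=\mathbf{x}-a$. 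Since $\tfrac{1}{2\pi\mathrm{i}}\cdot\mathrm{i}\pi=\tfrac12$, this gives $J_a(\upvarphi_{\mathbf{x}})(t)=\tfrac12\mathbb{E}\!\left[e^{\mathrm{i}t\mathbf{x}}\,\mathrm{sgn}(\mathbf{x}-a)\right]$; here $\mathrm{sgn}(\mathbf{x}-a)$ is well defined almost surely because $\mathbf{x}$ is continuous, so $\mathbb{P}(\mathbf{x}=a)=0$.

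The one genuinely delicate step is justifying the interchange of $\mathbb{E}$ with the principal-value integral, since $\eta\mapsto 1/\eta$ is integrable neither near the origin nor at infinity and $\mathbf{x}$ need not possess a finite mean. I would handle it by truncating: for $0<\epsilon<A<\infty$ the set $\{\epsilon<|\eta|<A\}$ is bounded and bounded away from $\eta=0$, so Fubini applies on it and, using oddness to kill the cosine term,
\[
\frac{1}{2\pi\mathrm{i}}\int_{\epsilon<|\eta|<A}e^{-\mathrm{i}a\eta}\upvarphi_{\mathbf{x}}(t+\eta)\,\frac{\mathrm{d}\eta}{\eta}=\mathbb{E}\!\left[e^{\mathrm{i}t\mathbf{x}}\cdot\frac{1}{\pi}\int_{\epsilon}^{A}\frac{\sin\!\big(\eta(\mathbf{x}-a)\big)}{\eta}\,\mathrm{d}\eta\right].
\]
The inner integral is $\mathrm{Si}\big(A(\mathbf{x}-a)\big)-\mathrm{Si}\big(\epsilon(\mathbf{x}-a)\big)$ up to a sign, and the sine-integral function $\mathrm{Si}$ is uniformly bounded on $\mathbb{R}$; hence the bracketed quantity is dominated by a constant, which is trivially integrable against $\mathbb{P}_{\mathbf{x}}$. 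Letting $\epsilon\downarrow 0$ and $A\uparrow\infty$, the inner integral converges pointwise to $\tfrac{\pi}{2}\,\mathrm{sgn}(\mathbf{x}-a)$, and the dominated convergence theorem pushes the limit through the expectation, producing exactly the right-hand side of \eqref{eq:prop1}. This is the part I expect to write most carefully; everything before and after it is bookkeeping.

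Alternatively, one may begin from the symmetrized form \eqref{eq:Jdefn2}, where the same substitution yields the integrand $\mathbb{E}\!\left[e^{\mathrm{i}t\mathbf{x}}\cdot 2\mathrm{i}\,\tfrac{\sin(\eta(\mathbf{x}-a))}{\eta}\right]$; the elementary bound $|\sin(\eta c)/\eta|\le\min(|c|,1/|\eta|)$ removes the singularity at $\eta=0$, and combined with the same uniform-$\mathrm{Si}$ estimate at infinity it supports the identical truncate–Fubini–dominate argument. Either route reduces the proposition to the Dirichlet integral evaluation stated above, which is the essential content.
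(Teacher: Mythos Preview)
Your proposal is correct and follows essentially the same route as the paper, which simply states that the result follows from the definition of $J_a$ together with Fubini's theorem and defers to \cite{CF_PINELIS} for details. Your write-up supplies precisely those details---the truncation, the uniform $\mathrm{Si}$ bound, and dominated convergence---that the paper omits, so it is a more careful version of the same argument rather than a different one.
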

\begin{proof}
The proof is straightforward using the definition of $J_{a}$ in (\ref{eq:Jdefn}) and Fubini's theorem. See \citep{CF_PINELIS} for details.
\end{proof}

Next, consider the ReLU operator $\mathrm{ReLU}(x) = \max(0, x)$. 
Using the identity 
\begin{equation}
    \label{eq:reluIdentity}
    2e^{\textrm{i}t\max(0,x)} = 1 + e^{\mathrm{i}tx} + e^{\mathrm{i}tx} \ \mathrm{sgn}(x) - \mathrm{sgn}(x),
\end{equation}
we can derive the CF of the ReLU operator.
\begin{corr}
The characteristic function of the random variable $\mathbf{x}_{+} := \max(0,\mathbf{x})$ is given by
\begin{align}
    \label{eq:reluCF}
    \upvarphi_{\mathbf{x}_{+}}(t) := \mathbb{E}[e^{\mathrm{i}t\mathbf{x}_{+}}] = \frac{1}{2}\left[1 + \upvarphi_{\mathbf{x}}(t)\right] + J_0(\upvarphi_{\mathbf{x}})(t) - J_0(\upvarphi_{\mathbf{x}})(0).
\end{align}
\end{corr}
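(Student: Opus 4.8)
The plan is to derive \eqref{eq:reluCF} directly from the pointwise algebraic identity \eqref{eq:reluIdentity}: substitute the random variable $\mathbf{x}$ for $x$, take expectations of both sides, and then recognize the two $\mathrm{sgn}$-containing terms through \eqref{eq:prop1}. There is essentially no analytic work beyond what the preceding proposition already supplies; the argument is bookkeeping plus one elementary case check.

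First I would verify that \eqref{eq:reluIdentity} holds for every $x \in \mathbb{R}$ by splitting into the cases $x > 0$, $x < 0$, and $x = 0$. When $x > 0$ both sides collapse to $2e^{\mathrm{i}tx}$, and when $x \le 0$ both sides collapse to $2$ (using $\mathrm{sgn}(0) = 0$). Since $\mathbf{x}$ is a continuous random variable, the event $\{\mathbf{x} = 0\}$ has probability zero, so the convention chosen for $\mathrm{sgn}$ at the origin is immaterial; the identity therefore holds almost surely after the substitution $x \mapsto \mathbf{x}$.

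Next, taking expectations of $2e^{\mathrm{i}t\mathbf{x}_{+}} = 1 + e^{\mathrm{i}t\mathbf{x}} + e^{\mathrm{i}t\mathbf{x}}\,\mathrm{sgn}(\mathbf{x}) - \mathrm{sgn}(\mathbf{x})$ is justified term by term, since each of the four random variables on the right has modulus bounded by $1$ and is hence integrable. This gives $2\upvarphi_{\mathbf{x}_{+}}(t) = 1 + \upvarphi_{\mathbf{x}}(t) + \mathbb{E}[e^{\mathrm{i}t\mathbf{x}}\,\mathrm{sgn}(\mathbf{x})] - \mathbb{E}[\mathrm{sgn}(\mathbf{x})]$. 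Applying \eqref{eq:prop1} with $a = 0$ yields $\mathbb{E}[e^{\mathrm{i}t\mathbf{x}}\,\mathrm{sgn}(\mathbf{x})] = 2 J_0(\upvarphi_{\mathbf{x}})(t)$, and evaluating that same relation at $t = 0$ yields $\mathbb{E}[\mathrm{sgn}(\mathbf{x})] = 2 J_0(\upvarphi_{\mathbf{x}})(0)$. Substituting both and dividing by $2$ produces exactly \eqref{eq:reluCF}.

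The only place that deserves care is the invocation of \eqref{eq:prop1}: the operator $J_0$ is defined through a Cauchy principal value integral in \eqref{eq:Jdefn}, so one must know that this principal value exists and equals $\tfrac{1}{2}\mathbb{E}[e^{\mathrm{i}t\mathbf{x}}\,\mathrm{sgn}(\mathbf{x})]$ — but that is precisely the statement of the preceding proposition (established via Fubini's theorem), which we are entitled to use. Hence the "main obstacle" here is not a genuine difficulty; once the pointwise identity and the proposition are in hand, the corollary follows by linearity of expectation and a single evaluation at $t=0$.
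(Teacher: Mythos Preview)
Your proposal is correct and follows exactly the paper's own argument: take expectations of the pointwise identity \eqref{eq:reluIdentity} and then invoke \eqref{eq:prop1} with $a=0$ to rewrite the two $\mathrm{sgn}$ terms as $2J_0(\upvarphi_{\mathbf{x}})(t)$ and $2J_0(\upvarphi_{\mathbf{x}})(0)$. You simply supply more detail than the paper does (the case check for \eqref{eq:reluIdentity}, the integrability justification, and the remark that $\mathbb{P}(\mathbf{x}=0)=0$ makes the $\mathrm{sgn}(0)$ convention irrelevant), but the route is identical.
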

\begin{proof}
From the identity in (\ref{eq:reluIdentity}), take the expectation of both sides, which yields
\begin{equation}
    2\mathbb{E}[e^{\mathrm{i}t\mathbf{x}_{+}}] = 1 + \mathbb{E}[e^{\mathrm{i}t\mathbf{x}}] + \mathbb{E}[e^{\mathrm{i}t\mathbf{x}} \ \mathrm{sgn}(\mathbf{x})] - \mathbb{E}[\mathrm{sgn}(\mathbf{x})].
\end{equation}
Using (\ref{eq:prop1}) from Proposition~1 with $a = 0$ we get the desired result (\ref{eq:reluCF}).
\end{proof}
Comparing the alternative definition of $J_a$ in (\ref{eq:Jdefn2}) with the alternative definition of the HT in (\ref{eq:HTdefn2}) we can identify $J_0 = \frac{\mathrm{i}}{2}\mathcal{H}$, which implies that
\begin{align}
    \label{eq:step2Propagation}
    &\upvarphi_{\mathbf{x}^{k+1}_j}(t_j) =\frac{1}{2}(1 + \upvarphi_{\mathbf{y}^k_j}(t_j)) + \frac{\mathrm{i}}{2}\left[\mathcal{H}(\upvarphi_{\mathbf{y}^k_j})(t_j) - \mathcal{H}(\upvarphi_{\mathbf{y}^k_j})(0)\right],
\end{align}
where $t_j = e^\intercal_{j,h_{k+1}}t$ isolates the $j$th element of the frequency variable $t$.

\section{Probabilistic DNN Verification}\label{sec:verification}

With the developed CF machinery outlined in Section~\ref{sec:ReLU_CF}, we can verify ReLU networks to a prescribed degree of accuracy.
For example, if $p = 0.05$, then a NN passes verification if \textit{at least} 95\% of the input samples belong in the desired output set $\mathcal{S}$.
We presume that the output set $\mathcal{Y} \subseteq \mathbb{R}^{h_L}$ can be represented by a convex polytope, that is, an intersection of halfspaces.  For notational simplicity, we consider an output set that can be written as 
\begin{equation}  \label{eq:lincon}
    \mathcal{S} = \{y\in\mathbb{R}^{h_l} \ | \ c^\intercal y \leq d\}.
\end{equation}%
and note that generalization to convex polytopes follows easily by analyzing each half-space independently. 

\begin{minipage}[c]{0.92\textwidth}
\LinesNumbered
\begin{algorithm}[H]
\caption{ReLU Network Verification}\label{alg:cap}
\SetKwInOut{Input}{Input}
\SetKwInOut{Output}{Output}
\Input{$\upvarphi_{\mathbf{x}}$, $\{c,d\}$, $N, M, h, p$}
\Output{$\hat{p}$, pass/fail}
$\upvarphi_{\mathbf{x}^0_{j}}$ $\gets$ Compute initial CF components on grid

$\upvarphi_{\mathbf{x}^L_{j}}$ $\gets$ Propagate through ReLU network using (\ref{eq:ILindividual}) and (\ref{eq:step2Propagation})

$\upvarphi_{\mathbf{y}}$ $\gets$ Compute CF of output r.v. $\mathbf{y} := c^\intercal \mathbf{x}_{L}$

$\Phi_{\mathbf{y}}$ $\gets$ Compute CDF using (\ref{eq:cftocdf})

$\hat{p} := \mathbb{P}(\mathbf{y}\in\mathcal{S}) = 1-\Phi_{\mathbf{y}}(d)$

\If{$\hat{p} < 1 - p$}
{Fail verification
\lElse{Pass verification}
}
\end{algorithm}
\end{minipage}

The first three parameters the algorithm accepts are the characteristic function of the input, $\upvarphi_{\mathbf{x}}$, and the parameters that define the half-space, $c,d$.
The last two design choices are the HT resolution, specified by $N, h, M$, and the cutoff probability for verification, $p$.
The initial CF is then propagated through the network, which yields the final CF. 
Since the output set is a half-space, the probability for the output $\mathbf{x}^{L}$ to be in the half-space is given by
\begin{equation}
    \mathbb{P}(\mathbf{x}^{L} \in \mathcal{S}) = \Phi_{\mathbf{y}}(d) = \frac{1}{2} - \frac{\mathrm{i}}{2}\mathcal{H}(e^{-\mathrm{i}td}\upvarphi_{\mathbf{y}}(t))(0),
\end{equation}
where $\mathbf{y} := c^\intercal\mathbf{x}^{L}$ and $\upvarphi_{\mathbf{y}}(t) = \prod_{j} \upvarphi_{\mathbf{x}^{L}_{j}}(c_{j}t)$.
Thus, Steps 3-4 in Algorithm~1 compute the associated CF and CDF of the constraint \eqref{eq:lincon}.
The CDF evaluated at $x = d$ represents the probability of the event $\{c^\intercal\mathbf{x} \leq d\}$; if this value is less than $1 - p$, this is below the cutoff for verification. 
As an example, if $\Phi_{c^\intercal\mathbf{x}^{L}}(d) = 0.7$ but $p = 0.1$, then only 70\% of samples from the output set lie in the safety set, which is less than the cutoff of 90\%; hence the verification test fails in this case.

\section{Examples} \label{sec:examples}

We provide two examples that illustrate the proposed verification algorithm.
Both examples use ReLU feedfoward neural networks from the verification literature.
All simulations were run on a 32~GB Intel i7-10750H @ 2.60~GHz computer.
For computations and memory storage, we use python with JAX \citep{jax2018github}. 
JAX was run on CPU-only mode but can be run on GPUs or TPUs.
All trials of the verification algorithm were compared to an empirical truth computed by brute-force propagation of $10^4$ samples through the ReLU networks for each example.

\subsection{Example 1}

\begin{figure}[htb]
    \centering
    \includegraphics[width=\textwidth,trim={0.1cm 0cm 0.1cm 0},clip]{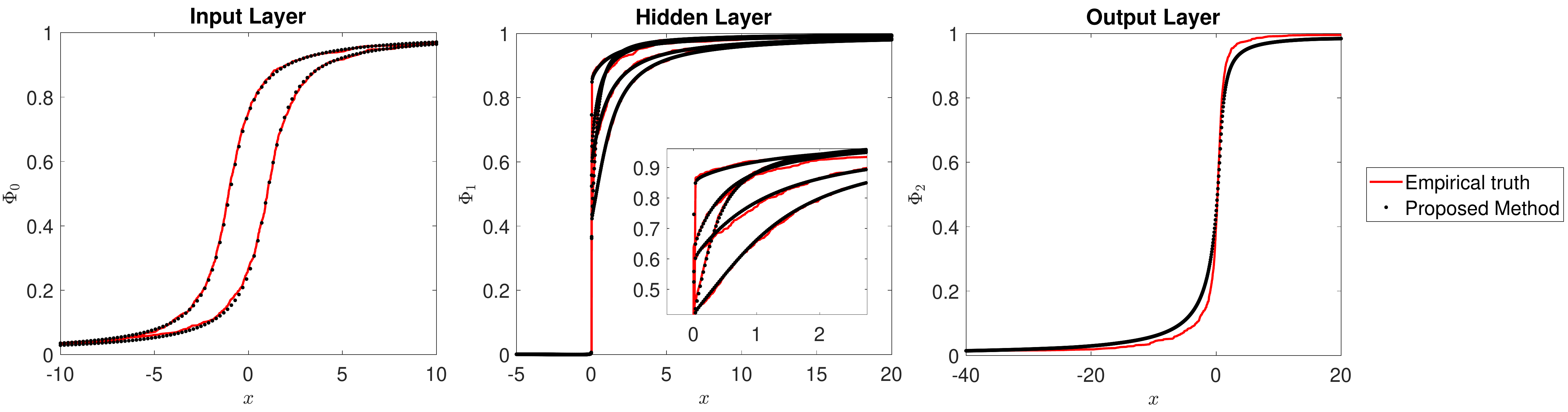}
    \caption{The characteristic function and CDF for each layer in the ReLU network. 
    The CDF computed using the proposed method (black dot) using (\ref{eq:cftocdf}) closely resembles the empirical CDF computed from brute-force propagation of $10^4$ input samples (red line).}
    \label{fig:CDFsComparison}
\end{figure}

To showcase the proposed verification scheme and its advantages, we simulated the following scenario, adapted from \cite{VERIF_PAVONE}. 
We run a set of 1000 trials, where the ReLU network weights and biases are uniformly sampled from $\U[-1, 1]$ for various parameters affecting the accuracy of the CF propagation.
The network architecture has two inputs, one output, and one hidden layer with 10 neurons.
The output safety set is $\mathcal{S} = \{x_{L} : x_{L} \geq 0\}$.
The maximum probability of lying outside the safety set is $p = 0.05$.
Lastly, we also generated (an approximation of) the true output set $\mathcal{Y}$ by propagating 1 million samples from the input set through the network.
The inputs are modeled as Cauchy distributions with CF
\begin{equation}
    \upvarphi_{0}(t) = \textrm{exp}(x_0 \mathrm{i} t - \gamma |t|),
\end{equation}
with locations $x_0^{(1)} = 1, x_0^{(2)} = -1$ and scale $\gamma^{(1)} = \gamma^{(2)} = 1$.
The CF of a distribution allows one to easily compute its moments from the derivatives of the CF via the following expression
\begin{equation}
    \mathbb{E}[\mathbf{x}^k] = \mathrm{i}^{-k}\upvarphi_{\mathbf{x}}^{(k)}(0), \label{eq:momentsFromCF}
\end{equation}
where $f^{(k)}$ denotes the $k$th derivative of the function $f$.
We emphasize that the derivatives of the Cauchy CF do \textit{not} exist at zero, hence this distribution does not have any standard moments nor does it have a moment generating function.
As a result, the methods proposed in \cite{Pautov_2022, probVerification_fazlyab,PROVEN} would not work in this case.

To show how the distribution of the inputs propagates throughout the ReLU network, we take a snapshot of the CFs and CDFs for a few random trials. 
The plots in Figures~\ref{fig:CDFsComparison}-\ref{fig:CDFprobabilities} correspond to the parameters $\{N,h,M,d\} = \{10000, 0.05, 5000, 50\}$, namely, 10001 terms in the HT for each of the $10^4$ grid points in the domain $\mathcal{D} = \{t : t \in [-50, 50]\}$.
Figure~\ref{fig:CDFsComparison} shows the CDF at each layer in the network, as computed from the CF through the HT.
It closely resembles the \textit{ground-truth} CDF computed via sampling.

\begin{figure}[t!]
    \centering
    \subfigure[For this random trial, our method accurately determines the violation of the output safety set. 
    Our results (black) are very close ($|\delta\Delta| = 0.049$) to the empirically obtained CDF and likelihood (red).][t]{%
\label{fig:errorProb}\includegraphics[width=0.48\textwidth]{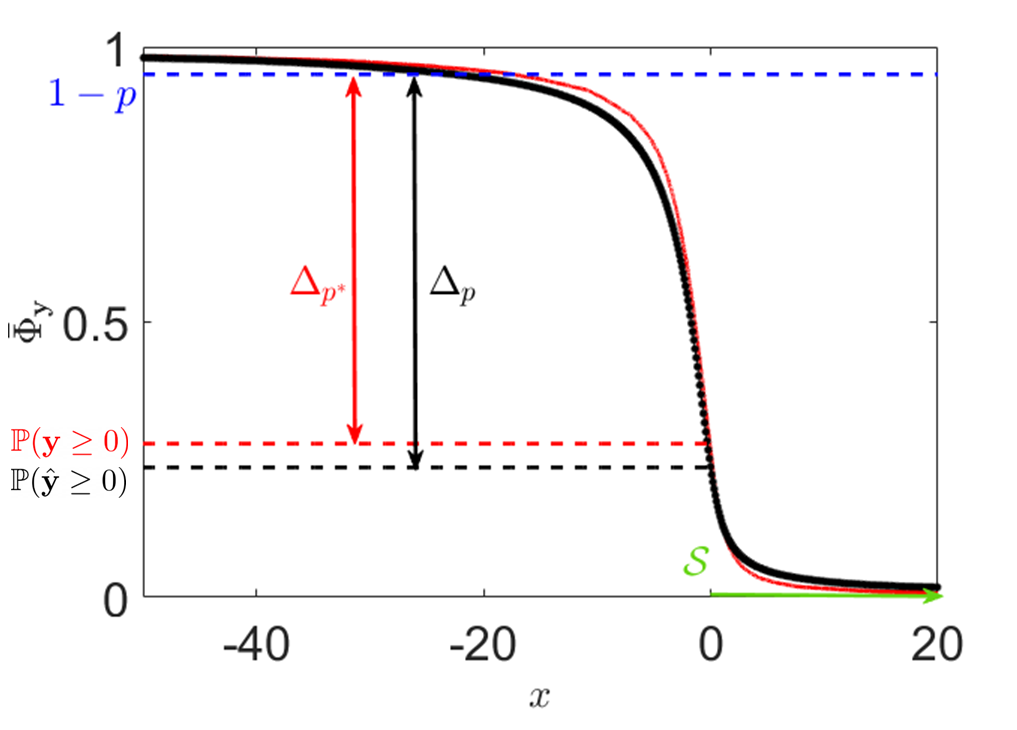}}
    \hspace{0.25cm}
    \subfigure[Our estimated safety set at the desired probability threshold (black) is much closer to the empirically determined safety set (red) as compared to other SoTA methods (magenta).][t]{\label{fig:outputQuantile}\includegraphics[width=0.465\textwidth,trim={0cm 0cm 0cm 0cm}]{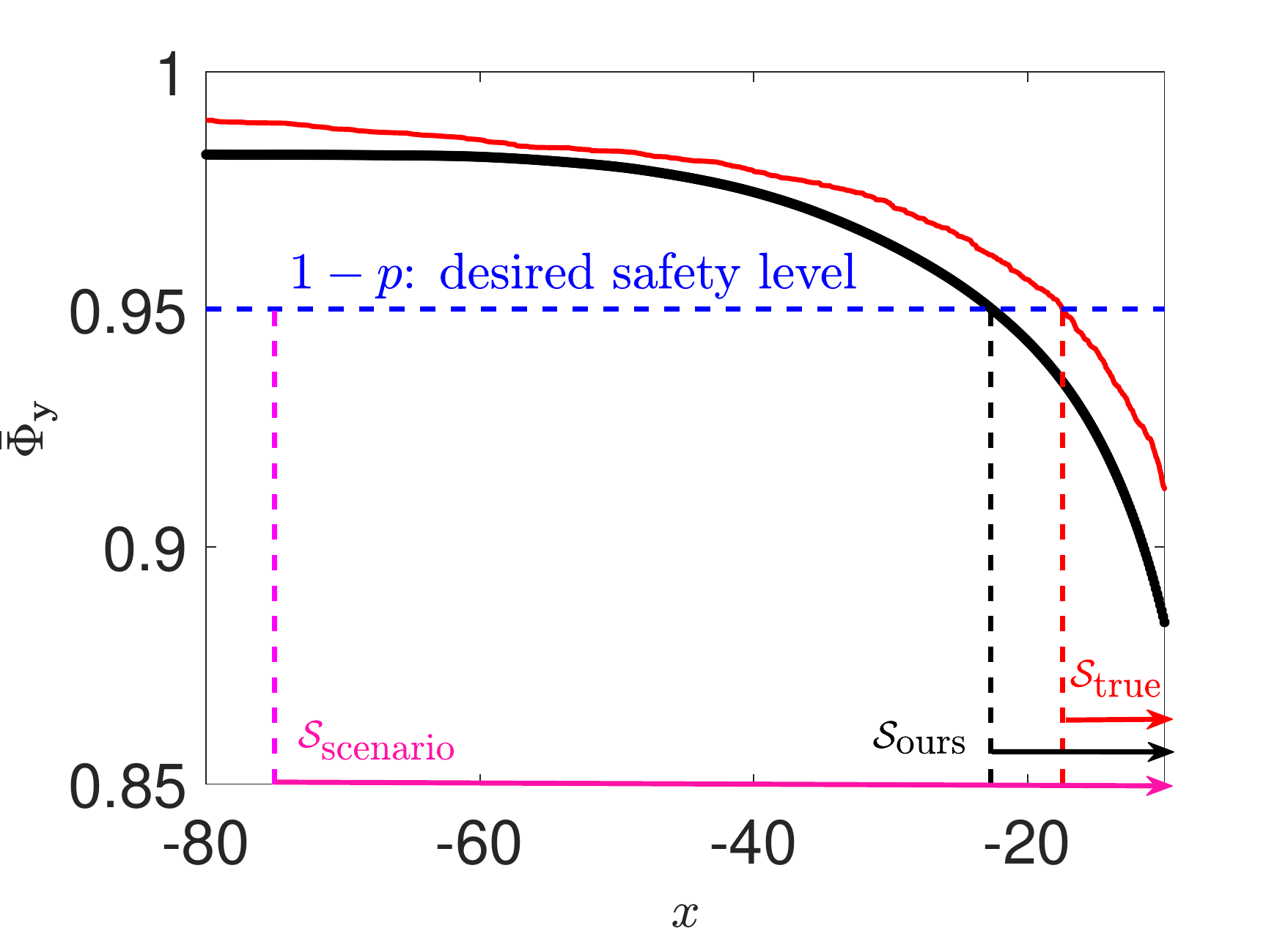}}
    \caption{Comparison of ReLU network safety verification.}
    \label{fig:CDFprobabilities}
\end{figure}

The accuracy of this propagation depends on the grid resolution in the frequency domain and the numerical accuracy of the HT used to propagate the CF through the max layer.
A finer grid in the frequency domain with a large number of terms in the HT summation yields better results than a coarser grid with fewer terms in the summation.
To illustrate this, the fourth column in Table~\ref{tbl:gridResTimes} computes the average error in probability across all trials for various values of the grid resolution and HT parameters, where $\Delta$ represents the difference in the computed probability of success with the given probability threshold, i.e.,
\begin{equation}
    \Delta := \bar{\Phi}_{\mathbf{y}}(0) - (1 - p),
\end{equation}
where $\bar{\Phi}_{\mathbf{y}}(0) := \mathbb{P}(\mathbf{x}^L > 0) = 1 - \Phi_{\mathbf{x}^L}(0)$ is known as the \textit{complementary} CDF.
See Figure~\ref{fig:CDFprobabilities} for a visual representation of these differences.
Thus, the difference in these deltas is a metric for how accurate the CF propagation is --- if the numerics were exact ($M,L\rightarrow\infty$), then $\Delta_{p^*} = \Delta_{p}$.
Note that the trial for Figure~\ref{fig:CDFprobabilities} fails verification because $\Delta < 0$, which implies that the probability of being in the safety set is \textit{less} than $1 - p = 0.95$.

For the trial in Figure~\ref{fig:CDFprobabilities}(a), the estimated probability of being in the safety set is approximately 23.6\%, whereas the true probability is 27.9\%, giving $|\delta\Delta_p| = 4.3\%$.
In addition, we compared against \cite[Appendix D]{9857969}, which uses scenario optimization to solve the reverse problem; namely, that of finding the maximal safety set $\mathcal{X} = \bar{r}(p) = \sup_{r}\{r\in\mathbb{R} : \mathbb{P}(\mathbf{y} > r) \geq 1 - p\}$.
Running the method by choosing samples according to $N\geq\frac{2}{\epsilon}(\log(\frac{1}{\delta})+1)$ where $\delta$ is a confidence parameter such that $\mathbb{P}_{\tilde{r}}\{\mathbb{P}(\mathbf{y} > r) \geq 1 - p\}\geq 1-\delta$.  
With $\delta = 10^{-5}$, we require $501$ samples. 
Over 500 trials, we generated 501 samples and propagated them through the network. 
The best $\tilde{r} = -74.99$ with the average over 500 trials is $\mathbb{E}[\tilde{r}] = -3297.92$, whereas the \textit{true} 95\% quantile occurs at $x^* = -17.43$ while our estimated quantile is at $x = -22.67$.
We mark the quantile values with vertical lines on Figure~\ref{fig:CDFprobabilities}(b).
The Cauchy distribution has a longer tail than the normal distribution, thus sampling from it produces more outliers.
This does not bode well for sampling-based verification methods, causing large over-approximations of the safety set.
\begin{table}[t!]
	\caption{Verification times, approximation errors for different values of Hilbert transform terms ($h, M$), and grid resolution ($N$).}%
	\centering
	\begin{tabular}{ccccc||ccccc}
		\hline
		$h$ & $N$ & $M$ & $\mathbb{E}[|\delta\Delta_p|]$ & Avg Time (s) & $h$ & $N$ & $M$ & $\mathbb{E}[|\delta\Delta_p|]$ & Avg Time (s) \\
		\hline
		1.0 & $10^4$ & 5,000 & 0.0259 & 17.78 & 0.7 & $10^4$ & 2,000 & 0.0254 & 7.26\\
		0.6 & $10^4$ & 5,000 & 0.0238 & 17.52 & 0.7 & $10^4$ & $10^3$ & 0.0303 & 3.55\\
		0.5 & $10^4$ & 5,000 & 0.0209 & 18.69 & 0.7 & $10^3$ & $10^3$ & 0.1007 & 0.29\\
		0.1 & $10^4$ & 5,000 & 0.0228 & 18.55 & 0.7 & $10^3$ & 100 & 0.1009 & 0.03\\
		\hline
	\end{tabular}
\label{tbl:gridResTimes}
\end{table}

Table~\ref{tbl:gridResTimes} also shows the average time it takes to complete verification for one trial for various parameters.
For a grid resolution of $10^4$ points and $10^3$ HT computations per grid point, we can get verification results in approximately 3~sec for a two-layer network.
Naturally, the accuracy of the propagation degrades with lower values for the parameters, but the computation time decreases, so there is a trade-off between accuracy and speed.

\subsection{Example 2}
\begin{figure}[t!]
    \centering
    \includegraphics[width=\textwidth,trim={0.1cm 0 0.1cm 0},clip]{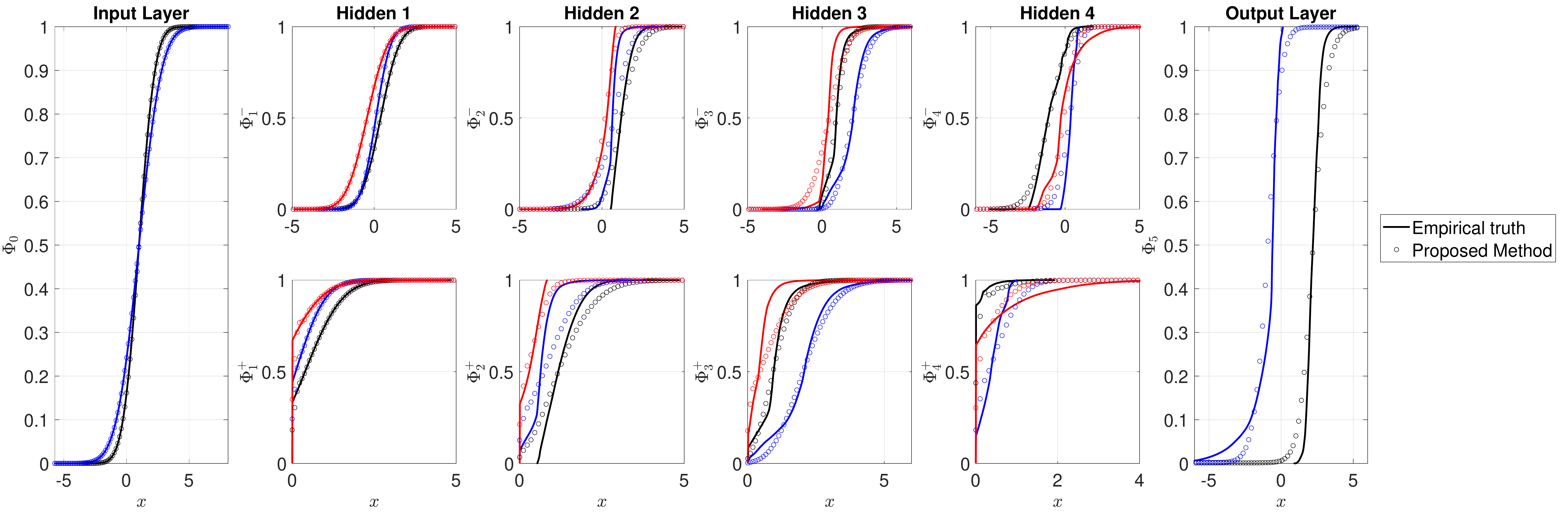}
    \caption{Comparison of empirical truth and estimated CDFs for each layer of the ReLU network where only the CDF of the first three neurons are plotted before activation, $\upvarphi^-$, and after activation, $\upvarphi^+$. The CDF as calculated from the CF via \eqref{eq:cftocdf} (circles) closely matches the empirically calculated CDF from the propagation of $10^4$ samples (solid lines) even for 50 neuron hidden layer deep networks.}
    \label{fig:CDFsDeepNetwork}
\end{figure}

We now consider a more complex network based on~\citep{probVerification_fazlyab}.
In this example, we have 2 inputs, 5 hidden layers, 50 neurons in each of the 5 hidden layers, and 2 outputs.
The inputs are normally distributed with mean $\mu_0 = [1, 1]^\intercal$ and covariance $\Sigma = \textrm{diag}(1,2)$.
We assume the weights and biases for each layer are randomly chosen from $\U[-1,1]$.
For the propagation we use $d = 20$ for the frequency cutoffs, $N = 10^4$ grid points for the frequency resolution, and $h = 0.5$ and $M = 5000$ for the HT computations.

Figure~\ref{fig:CDFsDeepNetwork} shows the evolution of the CDFs of each marginal distribution along the network. 
The labels $\upvarphi^{+/-}$ denote the CDF before and after the ReLU activation layer.
We see that the characteristic function propagation is relatively accurate throughout the whole network given the resolution in the CF and HT. 
The inaccuracies result from the evaluation of the CDF at $x = 0$ as can be first seen in $\Phi_{1}^{+}$.
The sinc method \citep{sincHT} that was used to compute the HT and CDF does not perform very well at discontinuity points and these errors propagate after each max layer.

\section{Conclusion and Future Work}\label{sec:conclusion}

We have presented a probabilistic verification scheme for ReLU neural networks using the machinery of characteristic functions.
We show that our method not only can achieve fast and accurate distribution propagation through a ReLU feedforward (deep) neural network, but also verification becomes a simple evaluation of the network output cumulative density function.
One extension of this work could be to optimize the risk level by minimizing $p$ such that $\mathbb{P}(f(\mathbf{x}^0)\in\mathcal{S})\geq 1-p$, for some input distribution $\mathbf{x}^0\sim\psi^0$.
Moreover, we can consider the reverse problem of finding the \textit{largest} input set $\mathcal{X}$ such that a network is probabilistically safe for a given risk level $p$~\citep{9857969,PROVEN}. 
Additionally, we could also study how the numerical errors in the HT computation propagate throughout the network; optimizing the parameters $M, h$ for each activation layer would greatly improve the accuracy of the propagation.
Lastly, it might be possible to extend this framework to other activation functions, as long as one can analytically propagate the CF through that activation function.

\section*{Acknowledgments}

We thank Brendon G. Anderson for providing us with the code of  \citep{9857969}. 
This work has been supported in part by the National Science Foundation under award CNS-1836900 and
by NASA under the University Leadership Initiative award 80NSSC20M0163.
Any opinions, findings, and conclusions or recommendations expressed in this material are those of the authors and do not necessarily reflect the views of the NSF or any NASA entity.

\bibliography{refs.bib}

\end{document}